\definecolor{first}{RGB}{135,219,135}
\definecolor{second}{RGB}{177,232,177}
\definecolor{third}{RGB}{219,244,219}
\newcommand{\R}[0]{\ensuremath{\mathbb{R}}}
\newcommand{\N}[0]{\ensuremath{\mathbb{N}}}
\newcommand{\Dc}[0]{\ensuremath{\mathcal{D}}}
\newcommand{\param}[0]{\ensuremath{\Delta}}
\newcommand{\pre}[0]{\ensuremath{\mathrm{pre}}}
\newcommand{\add}[0]{\ensuremath{\mathrm{add}}}
\newcommand{\del}[0]{\ensuremath{\mathrm{del}}}
\newcommand{\specificity}[0]{\ensuremath{\mathrm{spec}}}
\newcommand{\actionSchemas}[0]{\ensuremath{\mathcal{A}}}
\newcommand{\predicates}[0]{\ensuremath{\mathcal{P}}}
\newcommand{\objects}[0]{\ensuremath{\mathcal{O}}}
\newcommand{\initialState}[0]{\ensuremath{s_0}}
\newcommand{\goal}[0]{\ensuremath{G}}
\newcommand{\none}[0]{\ensuremath{\mathrm{None}}}
\newcommand{\partialSpaceSearch}[0]{\ensuremath{\mathrm{PS}^2}}
\newcommand{\stateSpaceSearch}[0]{\ensuremath{\mathrm{S}^3}}
\newcommand{\hFF}[0]{\ensuremath{h^{\mathrm{FF}}}}
\newcommand{\ilg}[0]{\ensuremath{\mathrm{ILG}}}
\newcommand{\actions}[0]{\ensuremath{\mathbb{A}}}
\newcommand{\states}[0]{\ensuremath{\mathbb{S}}}
\newcommand{\restrictionHeuristic}[0]{\ensuremath{h_\mathrm{rs}}}
\newcommand{\restrictedFF}[0]{\ensuremath{\hFF_\mathrm{rs}}}
\newcommand{\actionSet}[0]{\ensuremath{\Lambda}}
\newcommand{\uninstantiated}[0]{\ensuremath{\text{\underline{\hspace{0.5em}}}}}
\newcommand{\unavoidableAdd}{\ensuremath{\mathrm{unav^+}}}
\newcommand{\unavoidableDel}{\ensuremath{\mathrm{unav^-}}}
\newcommand{\optionalAdd}{\ensuremath{\mathrm{opt^+}}}
\newcommand{\optionalDel}{\ensuremath{\mathrm{opt^-}}}
\newtheorem{definition}{Definition}
\newtheorem{theorem}{Theorem}
\newcommand{\newtext}[2]{\textcolor{red}{#1}}
\title{Leveraging Action Relational Structures for Integrated Learning and Planning}
\author{
    Ryan Xiao Wang,
    Felipe Trevizan
}
\begin{document}

\maketitle

\begin{abstract}
    Recent advances in planning have explored using learning methods to help
    planning. However, little attention has been given to adapting search
    algorithms to work better with learning systems. In this paper, we introduce
    partial-space search, a new search space for classical planning that
    leverages the relational structure of actions given by PDDL action schemas
    -- a structure overlooked by traditional planning approaches. Partial-space
    search provides a more granular view of the search space and allows earlier
    pruning of poor actions compared to state-space search. To guide
    partial-space search, we introduce action set heuristics that evaluate sets
    of actions in a state. We describe how to automatically convert existing
    heuristics into action set heuristics. We also train action set heuristics
    from scratch using large training datasets from partial-space search. Our
    new planner, LazyLifted, exploits our better integrated search and learning
    heuristics and outperforms the state-of-the-art ML-based heuristic on IPC
    2023 learning track (LT) benchmarks. We also show the efficiency of
    LazyLifted on high-branching factor tasks and show that it surpasses LAMA in
    the combined IPC 2023 LT and high-branching factor benchmarks.
\end{abstract}

\section{Introduction}\label{sec:introduction}

Development in machine learning has led to significant interest in learning for
planning in recent years. The focus of existing work has been on learning domain
knowledge automatically in a domain-independent manner to help planning systems.
Such domain knowledge can be in the form of generalised policies
\citep{toyer-et-al-aaai2018, toyer-et-al-jair2020, stahlberg-et-al-icaps2022, stahlberg-et-al-kr2022,
    stahlberg-et-al-kr2023}, subgoal structures \citep{drexler-et-al-jair2024,
    bonet-geffner-jair2024}, and heuristics that guide search algorithms. We focus
on the common and successful approach of learning heuristics. Existing works
have learned heuristics using neural networks
\citep{shen-et-al-icaps2020,karia-srivastava-aaai2021,chen-et-al-aaai2024}, as
well as classical machine learning and optimisation methods
\citep{frances-et-al-ijcai2019,chen-et-al-icaps2024}, and have explored learning
heuristics to estimate metrics other than cost-to-goal
\citep{ferber-et-al-icaps2022,chrestien-et-al-neurips2023,hao-et-al-ijcai2024}.
However, to our knowledge, no work has explored how search algorithms can be
adapted to work better with learned heuristics. Specifically, all previously
mentioned learning for planning approaches use state-space search, where
heuristics are functions of planning states. Little discussion has been given to
if this state-based interface between heuristics and search is the most
effective, particularly for learning-based heuristics.

In this paper, we introduce partial-space search, a more granular search space
than state-space search that takes into account the actions schemas given
by the PDDL domain description \citep{haslum-et-al-2019}. Action schemas define
a natural tree structure of the space of partially instantiated actions, which
we call partial actions. Partial-space search is a refinement of state space
search that explores this tree structure for more granularity. Searching in this
space represents instantiating action schemas one parameter at a time, and
transitioning between states when all parameters are instantiated. Partial-space
search also represents a change in the interface between heuristics and search,
where heuristics are functions of state and partial action pairs.

Partial-space search offers both general advantages and learning specific
advantages. The added granularity makes heuristic search more efficient by
dividing a single state expansion in state-space search into multiple smaller
expansion steps with lower branching factors. This is particularly useful for
tasks with high branching factors, regardless of if learned heuristics are used.
For learning, we show this additional granularity allows the generation of
larger training datasets from the same training tasks and plans, which can be
used to train more powerful heuristics. Moreover, designing heuristics require a
trade-off between evaluation speed and heuristic informedness. As we will
discuss, partial-space search shifts this trade-off towards favouring
informedness. This benefits learning-based heuristics, in particular as they
continue to become more accurate and potentially slower to evaluate. Ultimately,
these advantages mean that partial-space search is potentially better suited for
future learning-based heuristics.

To construct heuristics for partial-space search, we view partial actions as
sets of actions, and define action set heuristics as functions of state and
action sets pairs. To make partial-space search compatible with any existing
state space heuristics, we show how to translate them into action set
heuristics. Since our focus is on learning-based heuristics, we also introduce
two novel graph representations for state and action set pairs, and extend the
methods proposed in
\citet{chen-et-al-icaps2024,hao-et-al-ijcai2024,chen-thiebaux-neurips2024} to
learn action set heuristics from our graph representations.

To evaluate our approach, we implement our contributions in a new planner called
LazyLifted. We use the International Planning Competition 2023 learning track
benchmarks and additional high branching factor benchmarks. When using the
translated action set heuristics (specifically, $\hFF$), partial-space search
outperforms state-space search with the original heuristic under high branching
factors. When using the learned action set heuristics, LazyLifted outperforms
the state-of-the-art learned heuristic. Altogether, LazyLifted outperforms LAMA
on the combined benchmarks in terms of coverage.

\section{Background}\label{sec:background}

\paragraph{Planning} A classical planning problem is a pair $\Pi = \langle D, I
    \rangle$ of a \emph{domain} $D$ and \emph{instance} $I$
\citep{geffner-bonet-2013, haslum-et-al-2019}. The domain $D = \langle
    \predicates, \actionSchemas \rangle$ provides the high level structure of
the problem by defining a set of \emph{predicates} $\predicates$ and a set
of \emph{action schemas} $\actionSchemas$. The instance $I = \langle
    \objects, \initialState, \goal \rangle$ provides the task specific
information by defining a set of \emph{objects} $\objects$, the
\emph{initial state} $\initialState$, and the \emph{goal} $\goal$. Each
predicate $P \in \predicates$ can be instantiated with objects $o_1, \ldots,
    o_k \in \objects$ to form a ground \emph{atom} $P(o_1, \ldots, o_k)$, where
the arity $k$ depends on $P$. Each action schema $A \in \actionSchemas$ has
\emph{schema arguments} $\param(A)$, \emph{preconditions} $\pre(A)$,
\emph{add effects} $\add(A)$, and \emph{delete effects} $\del(A)$.
Preconditions, add effects, and delete effects are expressed as predicates
instantiated with $\param(A)$. An action schema can be instantiated with
objects, resulting in a ground action where the schema arguments are
replaced with the instantiating objects. We denote the set of ground actions
with $\actions$. A predicate is \emph{static} if it does not appear in
the effect of any action schemas.

The problem $\Pi$ encodes a finite \emph{state space} $\states$, where each
state is a set of ground atoms, representing those that are true in the state.
The initial state $\initialState$ is an element of $\states$. The goal $\goal$
is a set of ground atoms, and a state $s$ is a \emph{goal state} if $\goal
    \subseteq s$. The ground actions $\actions$ define a transition system on
$\states$, where each action $a \in \actions$ can be applied in any state $s$
with $\pre(a) \subseteq s$, resulting in a new state $(s \setminus \del(a)) \cup
    \add(a)$. The set of applicable actions in a state $s$ is denoted $\actions_s$.
The solution to a planning problem is a \emph{plan} -- a finite sequence of
ground actions that can be applied sequentially starting from the initial state,
whose sequential application leads to a goal state. We assume all actions have
unit cost, and the cost of a plan is simply its length.

\paragraph{State-space search} The most prevalent approach to classical planning
is heuristic search on the state space. A (state space) \emph{heuristic} $h:
    \states \rightarrow \R_{\ge 0}$ estimates the quality of a state, with lower
values being better. A search algorithm, such as A* or Greedy Best First Search
(GBFS), explores the state space under the guidance of the heuristic. This
\emph{state-space search} typically requires grounding the planning task, i.e.,
computing the set of all reachable ground atoms and actions
\citep{helmert-jair2006}. Tasks where this is hard or infeasible are called
\emph{hard to ground}. Lifted planners, that do not require grounding the entire
task, have been developed to address this issue \citep{correa-et-al-icaps2020}.
Computation of various heuristics for lifted planners can be harder than for
grounded planners, but recent works have shown that well-known heuristics can be
computed efficiently \citep{correa-et-al-icaps2021,correa-et-al-aaai2022}.

\paragraph{Learning for Planning via Graphs}

\newcommand{\objectColour}{\ensuremath{\mathrm{ob}}}
\newcommand{\achievedGoal}{\ensuremath{\mathrm{ag}}}
\newcommand{\unachievedGoal}{\ensuremath{\mathrm{ug}}}
\newcommand{\achievedNonGoal}{\ensuremath{\mathrm{ap}}}

The current state-of-the-art method for learning heuristics uses the
Weisfeiler-Lehman (WL) algorithm. The WL algorithm is an iterative algorithm
that computes feature vectors representing the frequency of substructures in a
graph. The number of iteration is a hyperparameter that determines the
complexity of the substructures. \citet{chen-et-al-icaps2024} used the WL
algorithm to map graph representations to feature vectors and then learn a
linear function of these features. The graph representation they used is the
\emph{Instance Learning Graph} (ILG). Given a state $s$ and a planning problem
$\Pi$, the ILG is a graph $G = \langle V, E, c, l \rangle$ where
\begin{itemize}
    \item Vertices $V = \objects \cup s \cup \goal$.
    \item Edges $E = \bigcup_{p = P(o_1, \ldots, o_k) \in s \cup \goal}
              \{\langle p, o_1\rangle, \ldots, \langle p, o_k\rangle\}$.
    \item Vertex colours $c(v) \in (\{\achievedGoal, \achievedNonGoal,
              \unachievedGoal\} \times \predicates) \cup \{\objectColour\}$
          with
          \begin{align*}
              v \mapsto \begin{cases}
                            \objectColour,         & \text{if } v \in \objects                                \\
                            (\achievedGoal, P),    & \text{if } v = P(o_1, \ldots, o_k) \in s \cap \goal      \\
                            (\achievedNonGoal, P), & \text{if } v = P(o_1, \ldots, o_k) \in s \setminus \goal \\
                            (\unachievedGoal, P),  & \text{if } v = P(o_1, \ldots, o_k) \in \goal \setminus s
                        \end{cases}
          \end{align*}
    \item Edge labels $l(e) = i$ for $e = \langle p, o_i \rangle$.
\end{itemize}

States can be mapped to feature vectors by applying the WL algorithm to their
ILG representations. Given small example instances on the same domain with
example plans, \citet{chen-et-al-icaps2024} trained machine learning models
to map feature vectors of states in the example plans to their distance to
goal in the example plan. They developed the WL-GOOSE system, which uses
the trained models as state-of-the-art learned heuristics to guide GBFS in
state-space search. Note that in their work, all atoms whose predicates are
static are ignored when mapping states to their ILG.

\section{Partial-space Search}\label{sec:partial-space-search}

State-space search ignores the inherent hierarchy of decisions induced by action
schemas and their instantiation process. For example, in a planning task of
making and serving sandwiches, humans may first decide whether to make or serve
a sandwich, and then decide what type of sandwich or who to serve the sandwich
to. In state-space search, the planner would typically consider all the sandwich
making and sandwich serving actions at the same time, rather than deciding in a
hierarchical fashion. To exploit this hierarchy, we introduce partial actions.

\begin{definition}\label{def:partial-action} A \emph{partial action} is an
    action schema with some (incl. none and all) of its schema arguments
    instantiated by objects. It is denoted as $A(o, \uninstantiated)$ where $A
        \in \actionSchemas$, $o \in \objects$, and $\uninstantiated$ indicates an
    uninstantiated schema argument. The special partial action $\none$
    indicates that not even the action schema itself specified. For a partial
    action $\rho$ with $k$ schema arguments instantiated, its \emph{specificity}
    is $\specificity(\rho) = k+1$, where $\specificity(\none) = 0$.
\end{definition}

\begin{figure}
    \centering
    \includegraphics*[width=0.4742\textwidth]{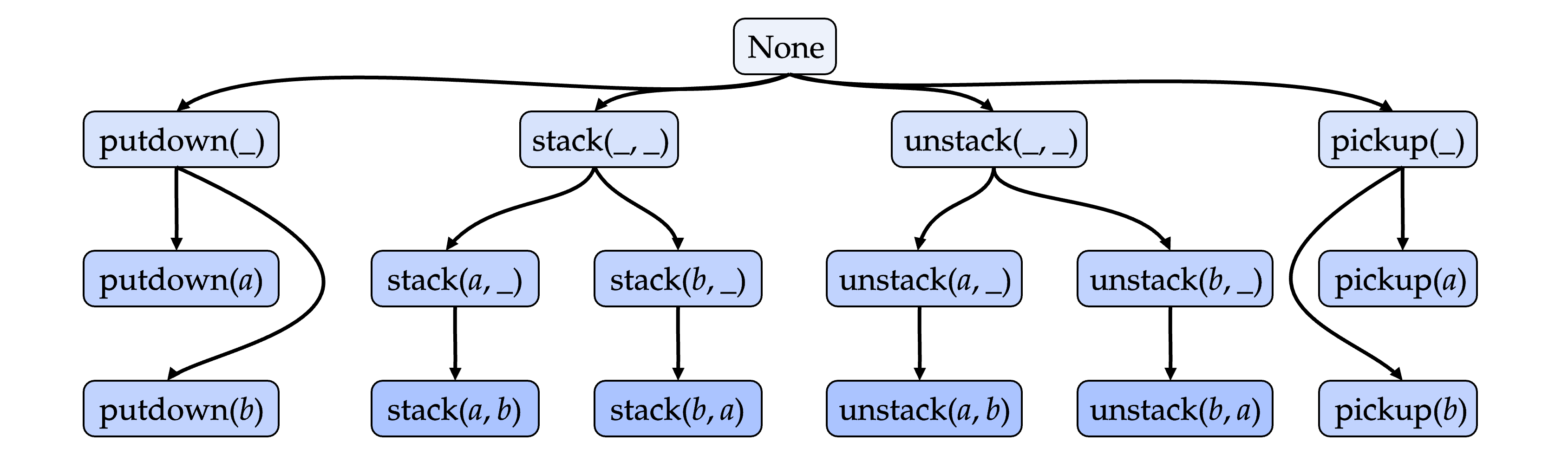}
    \caption{Partial action tree for a Blocksworld task with objects $a$ and
        $b$. Darker node colours indicate higher specificity.}
    \label{fig:partial-action-tree}
\end{figure}

We assume schema arguments of action schemas have a fixed order, and that
partial actions have a prefix of the ordered schema arguments instantiated. This
way, the partial actions form a \emph{partial action tree} rooted at $\none$,
whose children are the action schemas, and whose children are partial actions
with specificity $2$, etc. (see Figure \ref{fig:partial-action-tree}). A partial
action $\rho$ is a representation of the set of ground actions in its subtree,
denoted $\actions^\rho$. We use $\actions_s^\rho$ to denote the set
$\actions^\rho \cap \actions_s$. A partial action is applicable in a state $s$
if $\actions_s^\rho$ is not empty. Through the partial action tree, partial
actions naturally represent the hierarchical nature of actions, which allows
defining partial-space search.

\begin{definition}\label{def:partial-space-search} Given a classical planning
    task $\Pi$, the \emph{partial-space search} of $\Pi$ is a search problem
    whose search nodes have the form $\langle s, \rho \rangle$ where $s \in
        \states$ and $\rho$ is a partial action. The search starts at the root node
    $\langle s_0, \none\rangle$. The successor nodes of each node $\langle s,
        \rho \rangle$ is given by
    \begin{itemize}
        \item If $\rho$ is not fully instantiated, the successors are the nodes
              with state $s$ and partial action $\rho'$, where $\rho'$ is an
              applicable child of $\rho$ in the partial action tree.
        \item Otherwise, the successor is $\langle s', \none \rangle$, where $s'$
              is the resulting state of applying $\rho$ in $s$.
    \end{itemize}
\end{definition}

Partial-space search finds a plan for $\Pi$ by reaching a node $\langle s, \none
    \rangle$ where $s$ is a goal state. The plan is the sequence of fully
instantiated partial actions from the root to this node. Moreover, in
practice many search nodes have only one successor node. Such nodes are
repeatedly expanded till obtaining multiple successor nodes. The following
theorem states the correctness of partial-space search. \newtext{}{Its proof is in
    appendix.}

\begin{theorem}\label{thm:partial-space-search}
    Partial-space search is sound and complete for solving planning tasks.
\end{theorem}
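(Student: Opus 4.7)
The plan is to prove both directions by establishing a correspondence between plans (sequences of ground actions from $\initialState$ to a goal state) and successful search paths in partial-space search (paths from $\langle \initialState, \none\rangle$ to $\langle s, \none\rangle$ where $s \supseteq \goal$). Every such search path decomposes cleanly at the $\none$-nodes into alternating ``tree descent'' segments (through the partial action tree) and ``state transition'' steps, and this decomposition is the key bridge between the two search formulations.

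For soundness, I would take any successful partial-space search path and show the sequence of fully instantiated partial actions encountered along it is a valid plan. State transitions only occur at Definition \ref{def:partial-space-search}'s second case, i.e., from a node $\langle s, \rho\rangle$ whose $\rho$ is a fully instantiated partial action, which can be identified with a single ground action $a$. Because each successor in the tree descent is required to be applicable, every such $\rho$ must satisfy $\actions_s^\rho \neq \emptyset$; when $\rho$ is fully instantiated, $\actions^\rho = \{a\}$, so $a \in \actions_s$ and $a$ is applicable in $s$. The resulting successor state $(s \setminus \del(a)) \cup \add(a)$ matches the state-space semantics exactly, so concatenating these ground actions across all segments yields a valid plan whose final state is a goal state.

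For completeness, given a plan $a_1, \ldots, a_n$ with induced trajectory $s_0, s_1, \ldots, s_n$ (with $\goal \subseteq s_n$), I would explicitly construct a corresponding partial-space search path. Starting at $\langle s_{i-1}, \none \rangle$, I descend through the partial action tree by repeatedly picking the unique child $\rho'$ of the current partial action such that $a_i \in \actions^{\rho'}$; such a $\rho'$ exists and is unique because the tree is built by instantiating schema arguments in a fixed prefix order. Since $a_i \in \actions_{s_{i-1}}$, we have $a_i \in \actions_{s_{i-1}}^{\rho'}$, so $\rho'$ is indeed an applicable successor at each step. After finitely many descents the partial action equals $a_i$, whose unique successor is $\langle s_i, \none \rangle$; iterating over $i = 1, \ldots, n$ reaches $\langle s_n, \none \rangle$, a goal node.

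The main subtlety will be keeping the applicability bookkeeping consistent at each level of the tree: verifying that ``$\rho$ is applicable in $s$'' propagates correctly both upward from the ground-action leaves (needed for soundness) and downward to the specific child chosen along the prefix of $a_i$ (needed for completeness). Beyond this, the argument is essentially a routine decomposition of each plan step into a tree descent plus one state transition, mirroring the two cases of the successor definition.
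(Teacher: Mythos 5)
Your proposal is correct and follows essentially the same route as the paper's proof: soundness by reading off the fully instantiated partial actions along a successful search path and checking applicability and state-transition semantics, and completeness by decomposing each plan action into its chain of increasingly specific partial actions and verifying the corresponding nodes are applicable successors. Your version merely spells out the applicability bookkeeping (e.g., that $\actions_s^\rho \neq \emptyset$ for a fully instantiated $\rho$ forces the unique ground action to be applicable) a bit more explicitly than the paper does.
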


\begin{proof}
    Given a planning task $\Pi$ and let $\pi$ be a plan found by partial-space
    search, we first show soundness, i.e., that $\pi$ is a valid plan.
    Partial-space search found $\pi$ by traversing a sequence of nodes $\langle
        s_0, \rho_0 \rangle, \langle s_1, \rho_1 \rangle, \ldots, \langle s_n,
        \rho_n \rangle$ where $s_0 = \initialState$, $\rho_n = \none$, and $s_n$ is
    a goal state. For each $i$, the definition of partial-space search
    guarantees that $\rho_i$ is an applicable partial action in $s_i$. Moreover,
    if $\rho_i$ is fully-instantiated then $s_{i+1}$ is the resulting state of
    applying $\rho_i$ in $s_i$, and otherwise the state is unchanged
        from one node to the next, i.e., $s_i = s_{i+1}$. Given these, since the
    plan $\pi$ is obtained by only keeping the fully-instantiated $\rho_i$'s, it
    must be a valid plan for $\Pi$.

    Next, we show completeness, i.e., that partial-space search can
        always find a plan for $\Pi$ as long as a valid plan exists. Let
    $\pi$ be such a valid plan for $\Pi$. Each action $a$ in $\pi$ can be
    decomposed into a sequence of partial actions with increasing
    specificity, starting at $\none$ and ending at $a$. Combining these
    partial actions with the state that $a$ was applied in yields a sequence
    of partial space search nodes. Doing this for all actions in $\pi$
    yields a sequence of partial space search nodes that are in the search
    tree of partial-space search. Therefore, partial-space search will find
    $\pi$.
\end{proof}

Partial-space search decomposes the process of expanding a single state in state
space search into multiple smaller expansion steps that are akin to gradually
narrowing down from the set of applicable actions. This can factorise the
branching factor of state-space search, as shown in the example in Figure
\ref{fig:expansion-tree}. Here, partial-space search removes the need to
evaluate all successor states resulting from the application of the $A_2$ action
schema because it determines $A_2$ itself to be a bad choice. This way, partial
space search achieves the same outcome as state-space search but with fewer
heuristic evaluations, which is particularly useful when branching factors are
high or the heuristic is expensive to evaluate but very accurate. The latter
case is what we mean by partial-space search moving the trade-off between
heuristic evaluation speed and informedness towards informedness. This shift is
particularly relevant for learning-based heuristics. Machine learning,
particularly deep learning, has trended towards more powerful models that are
more expensive to compute. Partial-space search is more suited for this trend.
Beyond being more suited for applying learning-based heuristics, partial-space
search also allows generating larger training datasets due to the increased
granularity. We discuss this further once we introduce how we generate training
data using partial-space search.

\begin{figure}
    \centering
    \includegraphics[width=0.4742\textwidth]{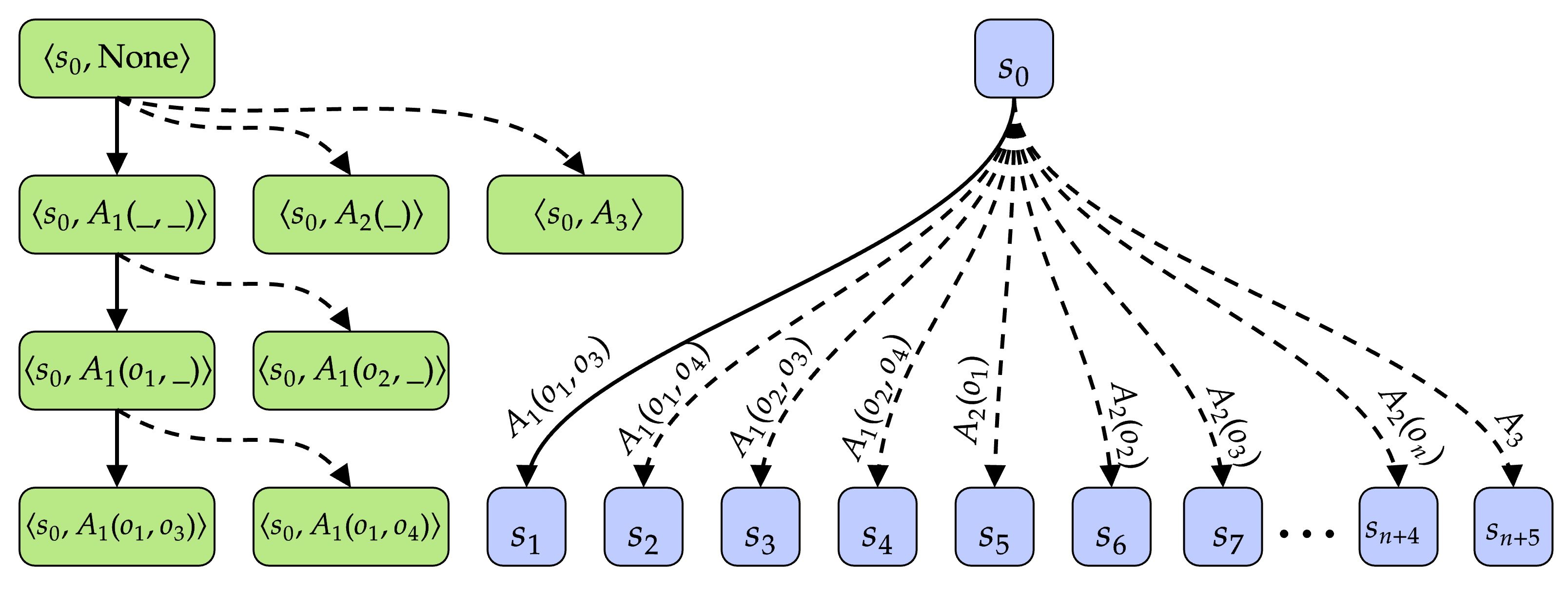}
    \caption{Example expansion trees of partial-space search (left) and state
        space search (right). Solid lines indicate expanded nodes, dashed lines
        indicate unexpanded nodes. Expansion decisions are guided by an informed
        heuristic.}
    \label{fig:expansion-tree}
\end{figure}

\section{Action Set Heuristics}

In order to guide partial-space search, we need heuristics that evaluate search
nodes of partial-space search, which are state and partial action pairs. Given
such a pair $\langle s, \rho \rangle$, we define heuristics by viewing $\rho$ as
the set of applicable actions $\actions^\rho_s$. This is a more general form of
heuristics than those we need, but helps simplify our discussion.

\begin{definition}
    An \emph{action set heuristic} is a function $h: \states \times 2^\actions
        \rightarrow \R$. Given a partial action $\rho$, we will use $h(s, \rho)$
    to denote $h(s, \actions_s^\rho)$.
\end{definition}

For a classical planning task, action set heuristics can be used
for heuristic search over the partial state space. Moreover, given an action
set heuristic $h$, we can obtain a state space heuristic $h'$ with $s \mapsto
    h(s, \none)$. Next, we describe two ways to obtain action set heuristics.

\subsection{Restriction Heuristics}

Our first method is to translate existing state space heuristics to action set
heuristics. Given a state space heuristic $h$, a state $s$, and a set of actions
$\actionSet$, we obtain an action set heuristic by modifying the original
problem such that only actions in $\actionSet$ can be applied in $s$, then
computing $h(s)$ in this modified problem. We achieve this by adding a predicate
$\epsilon$ to the problem to indicate if an action in $\actionSet$ has been
applied. We then modify the actions such that all actions add $\epsilon$ and
only actions in $\actionSet$ can be applied without $\epsilon$. More formally:

\begin{definition}[Restriction heuristic]\label{def:restriction-heuristic}
    For a planning task $\Pi$, given a state space heuristic $h$, a state $s$,
    and a set of actions $\actionSet$, the \emph{$\actionSet$-restricted task}
    $\Pi_\actionSet$ is $\Pi$ with the following modifications applied
    sequentially:
    \begin{enumerate}
        \item an additional predicate $\epsilon$, which has no parameters and is
              initially false, is added;
        \item the predicate $\epsilon$ is added as a precondition to all action
              schemas;
        \item each $a \in \actionSet$ is added as an action schema with no
              schema arguments that is identical to $a$, except with the
              additional add effect $\epsilon$;
    \end{enumerate}
    The \emph{restriction heuristic} $\restrictionHeuristic(s, \actionSet)$ is
    the action set heuristic that takes the value of $h(s)$ computed for the
    task $\Pi_\actionSet$.
\end{definition}

\noindent Despite its generality,
computing restrictions heuristics by definition can be computationally
expensive. This is because the task $\Pi_\actionSet$ must be constructed for
each possible action set $\actionSet$, which can be exponential in the number of
actions. However, for some heuristics, it is possible to overcome this by not
explicitly computing the restricted task.

An example heuristic whose restricted version can be efficiently computed is the
lifted version of the $\hFF$ heuristic \citep{correa-et-al-aaai2022}. Here, we
only outline the necessary adaptations for computing its restriction heuristic
and more details can be found in the appendix. \citet{correa-et-al-aaai2022}
utilised Datalog rules corresponding to action schemas, and required
preprocessing the task to generate these rules. For the restriction heuristic of
$\hFF$, namely $\restrictedFF$, we apply modifications 1 and 2
(Def.~\ref{def:restriction-heuristic}) to the task, which are independent of the
action set $\actionSet$, and then preprocess the modified task as described in
their work. After, for each heuristic evaluation, we apply modification 3 by
generating temporary Datalog rules corresponding to each action in $\actionSet$,
and then perform the heuristic computation as in their work. This modification
is temporary, i.e., it is discarded after the heuristic evaluation.

\subsection{Graph Representations}

Restriction heuristics represent a general approach to obtaining action set
heuristics from existing state space heuristics. An alternative is to learn
action set heuristics from scratch, which yields heuristics dedicated for
partial-space search instead of merely adapted state space heuristics. Recent
works on learning heuristics for planning have not only shown that very powerful
heuristics can be learned, but also that heuristic learning is very flexible. To
learn a heuristic using the WL algorithm, one only needs to define a graph
representation for search nodes and then specify through training dataset set
what properties the heuristic should have. There is no need to define the
heuristic function itself, and even better, the learned WL heuristics can also
be interpreted \citep{chen-et-al-icaps2024}.

To learn action set heuristics, we start by defining two novel graph
representations of state and action set pairs. These two graphs represent
shallow and deep embeddings of the action set into the ILG of the state,
respectively. They are used to generate feature vectors of state and action
pairs through the WL algorithm.

As a minor improvement over the ILG, we encode in the object colours the static
predicates with arity 1 that apply to each object. That is, like the ILG, both
of graphs will include object nodes. We use $\predicates_o$ to denote the set of
static predicates of arity 1 such that for each $P \in \predicates_o$, $P(o)$
holds in the initial state (and hence any state reachable from it). The colour
of the object node $o$ in our graphs is $\predicates_o$. This allows our graphs
to consider some static predicates, unlike the ILG that ignores all of them.

\paragraph{Action-Object-Atom Graph}

Our first graph, called the Action-Object-Atom Graph (AOAG, Def.
\ref{def:aoag}), is an extension of the ILG that directly encodes the actions in
the action set as nodes in the graph.

\begin{definition}[AOAG]\label{def:aoag}
    Given a state $s$ and a set of actions $\actionSet$, if $\actions_s
        \subseteq \actionSet$, then the \textbf{Action-Object-Atom Graph} (AOAG) of $(s,
        \actionSet)$ is simply the ILG of $s$ with static object colours. This
    in particularly happens when the partial action inducing $\actionSet$ is
    $\none$. Moreover, if $\actionSet = \{a\}$, then the AOAG is the ILG of
    the state obtained by applying $a$ in $s$, again with static object
    colours.

    Otherwise, let the ILG of $s$ with static object colours be $\langle V_\ilg,
        E_\ilg, c_\ilg, l_\ilg \rangle$, the AOAG is $\langle V, E, c, l \rangle$,
    where
    \begin{itemize}
        \item $V = V_\ilg \cup \actionSet$
        \item $E = E_\ilg \cup \bigcup_{a = A(o_1, \ldots, o_k) \in \actionSet}
                  \{\langle a, o_1 \rangle, \ldots, \langle a, o_k \rangle \}$
        \item $c(u)$ is $c_\ilg(u)$ if $u \not\in \actionSet$, otherwise
              for $u = A(\ldots) \in \actionSet$ we have $c(u) = A$.
        \item $l(e)$ is $l_\ilg(e)$ if $e \in E_\ilg$, otherwise for $e = \langle a,
                  o_i \rangle$ we have $l(e) = i$.
    \end{itemize}
\end{definition}

AOAG is a shallow embedding of the action set into the ILG of the state as it
only introduces vertices representing actions and edges connecting actions to
their parameters without representing the actions' effects. An example of AOAG
for Blocksworld is shown in Figure \ref{fig:aoag}.

\begin{figure}
    \centering
    \includegraphics[width=0.4742\textwidth]{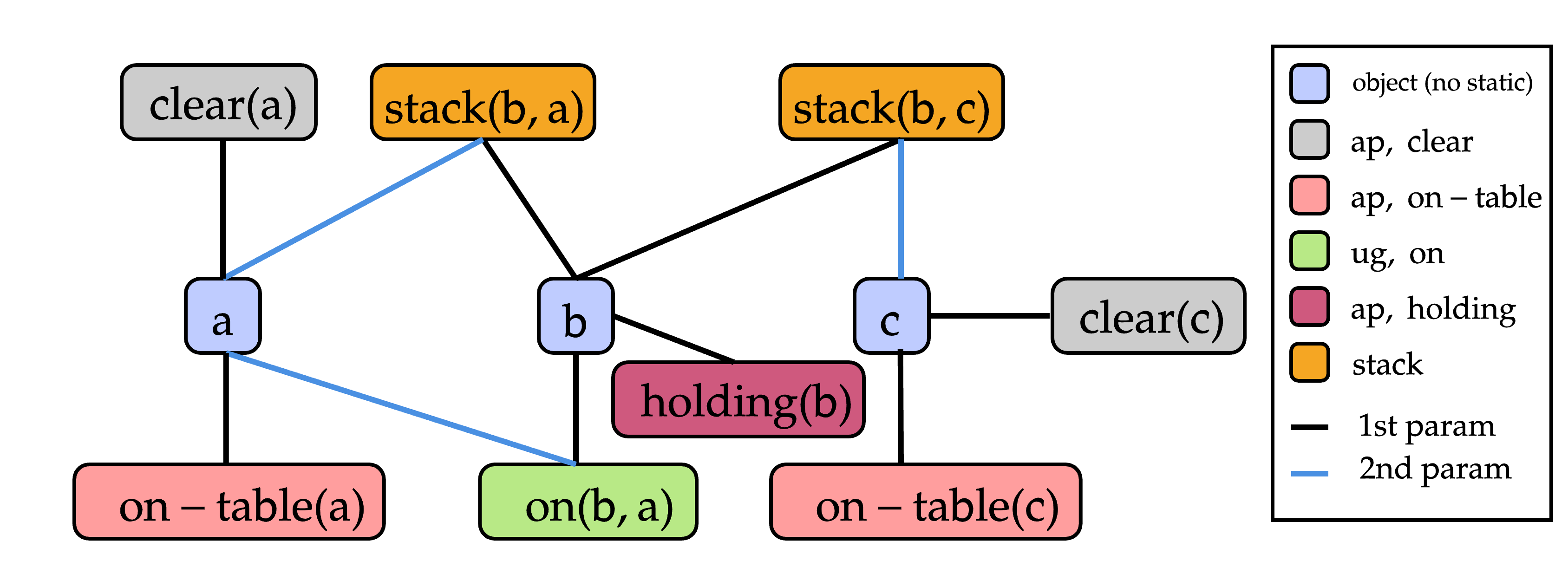}
    \caption{AOAG (Def. \ref{def:aoag}) of a Blocksworld Instance with blocks
        $a$ and $c$ on the table and $b$ being held, here the goal is to place
        $b$ on $a$ and $\actionSet$ contains all applicable $\mathtt{stack}$
        actions.}
    \label{fig:aoag}
\end{figure}

\paragraph{Action Effect Graph}

An alternative to directly encoding the actions in the graph is to instead
analyse the effects of the actions in $\actionSet$. This is the approach taken
to construct the Action Effect Graph (AEG, Def. \ref{def:aeg}), which is a deep
embedding of action sets into ILGs of the states. Here, we categorise the action
effects into two types, \emph{unavoidable} and \emph{optional}, and represent
them in the graph. Unavoidable effects are those that are present in all actions
in $\actionSet$, which we can directly apply in the state $s$. Optional effects
are those that are present in some actions in $\actionSet$ but not all, which we
can choose to apply or not. Since our action sets are induced from partial
actions, unavoidable effects are those that only depend on the schema arguments
instantiated so far.

\begin{definition}[AEG]\label{def:aeg}
    Given a state $s$ and a set of actions $\actionSet$. If $\actions_s
        \subseteq \actionSet$, the set $\unavoidableAdd(\actionSet)$ denotes the
    \emph{unavoidable add effects} of $\actionSet$, i.e. $\bigcap_{a \in
            \actionSet} \add(a)$; the set $\unavoidableDel(\actionSet)$ denotes the
    \emph{unavoidable delete effects} of $\actionSet$, i.e. $\bigcap_{a \in
            \actionSet} \del(a)$; the set $\optionalAdd(\actionSet)$ denotes the
    \emph{optional add effects} of $\actionSet$i, i.e. $\bigcup_{a \in
            \actionSet} \add(a) \setminus \unavoidableAdd(\actionSet)$; and lastly
    the set $\optionalDel(\actionSet)$ denotes the \emph{optional delete
        effects} of $\actionSet$, i.e. $\bigcup_{a \in \actionSet} \del(a)
        \setminus \unavoidableDel(\actionSet)$. As a special case, if
    $\actionSet = \actions_s$, then all these sets are empty. This in
    particular happens when the partial action inducing $\actionSet$ is
    $\none$.

    Let $s'$ be the state given by applying the unavoidable effects in $s$, i.e.
    $(s \setminus \unavoidableDel(\actionSet)) \cup
        \unavoidableAdd(\actionSet)$. The \textbf{Action Effect Graph} (AEG) is $\langle V, E, c, l \rangle$ where

    \newcommand{\achieved}{\ensuremath{\mathrm{a}}}
    \newcommand{\unachieved}{\ensuremath{\mathrm{u}}}
    \newcommand{\optionalAddColour}{\ensuremath{\mathrm{oa}}}
    \newcommand{\optionalDelColour}{\ensuremath{\mathrm{od}}}
    \newcommand{\goalColour}{\ensuremath{\mathrm{g}}}
    \newcommand{\nonGoalColour}{\ensuremath{\mathrm{ng}}}

    \begin{itemize}
        \item $V = \objects \cup G \cup s' \cup \optionalAdd(A) \cup
                  \optionalDel(A)$
        \item $E = \bigcup\textstyle_{p = P(o_1, \ldots o_k) \in V \setminus
                      \objects } \{\langle p, o_1 \rangle \ldots, \langle p, o_k
                  \rangle \}$
        \item $c: V \rightarrow 2^\predicates \cup (\{\achieved, \unachieved,
                  \optionalAddColour, \optionalDelColour\} \times \{\goalColour,
                  \nonGoalColour\} \times \predicates)$. For $o \in \objects$,
              $c(o) = \predicates_o$. Otherwise, for $p = P(o_1, \ldots,
                  o_k) \in V \setminus \objects$, $c(p) = (\alpha, \beta, P)$
              where
              \begin{align*}
                  \alpha = \begin{cases}
                               \optionalAddColour, & \mathrm{if\ } p \in \optionalAdd(A)                       \\
                               \optionalDelColour, & \mathrm{if\ } p \in \optionalDel(A)                       \\
                               \unachieved,        & \mathrm{if\ } p \in G \setminus (s' \cup \optionalAdd(A)) \\
                               \achieved,          & \mathrm{otherwise}                                        \\
                           \end{cases}
              \end{align*}
              and $\beta$ is $\goalColour$ if $u \in G$ and $\nonGoalColour$
              otherwise. Here $\achieved$, $\unachieved$, $\optionalAddColour$,
              $\optionalDelColour$ mean \textbf{a}chieved, \textbf{u}nachieved,
              \textbf{o}ptional \textbf{a}dd, and \textbf{o}ptional
              \textbf{d}elete respectively, $\goalColour$ and $\nonGoalColour$
              mean \textbf{g}oal and \textbf{n}on-\textbf{g}oal respectively.
        \item $l: E \mapsto \N$ with $\langle p, o_i \rangle \mapsto i$.
    \end{itemize}
\end{definition}

Unlike the AOAG, the AEG is a deep embedding of the action set into the ILG of
the state -- it considers the effects of the actions in the action set, and
represents them in the graph to reflect how they change the state. An example of
AEG for Blocksworld is shown in Figure \ref{fig:aeg}.

\begin{figure}
    \centering
    \includegraphics[width=0.4742\textwidth]{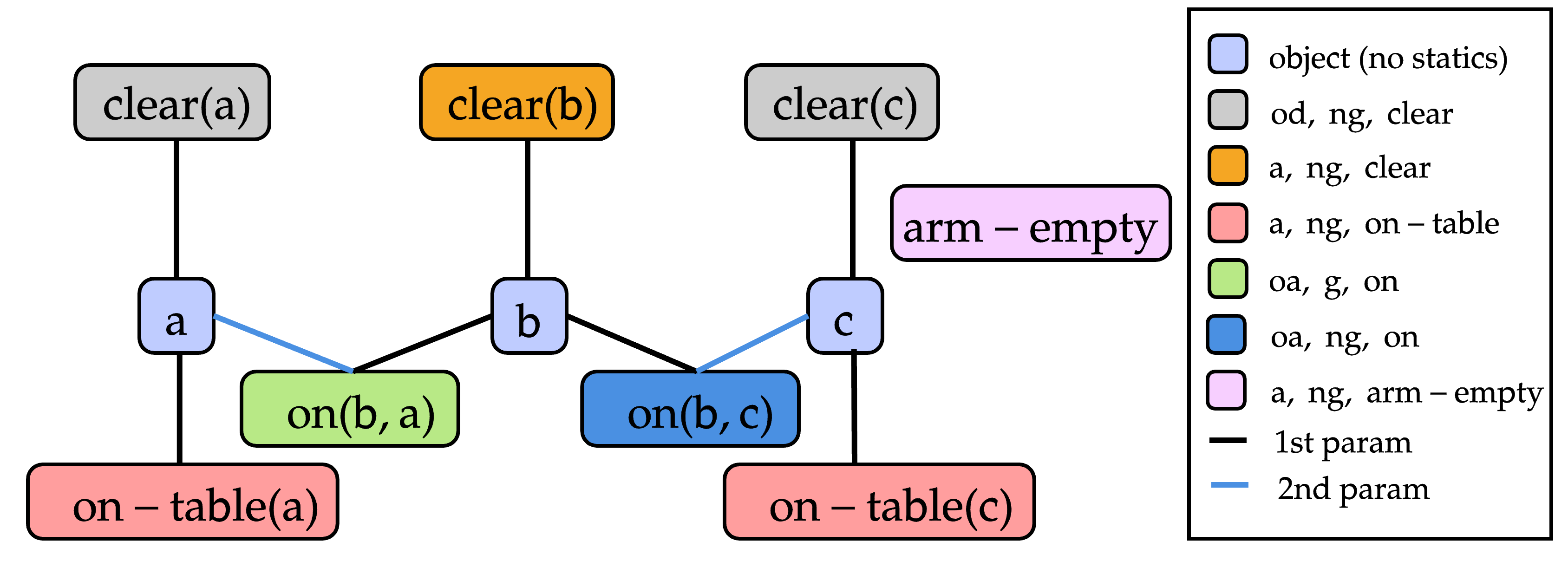}
    \caption{AEG (Def. \ref{def:aeg}) of a Blocksworld Instance with blocks $a$
        and $c$ on the table and $b$ being held, with the goal to place $b$ on
        $a$ and $\actionSet$ containing all applicable $\mathtt{stack}$
        actions.}
    \label{fig:aeg}
\end{figure}

\subsection{Training Action Set Heuristics}\label{sec:training}

We assume that our training data consists of a set classical planning tasks
$\Pi_1, \ldots, \Pi_n$ that share the same planning domain, with corresponding
plans $\pi_1, \ldots, \pi_n$. Each plan is a sequence of actions denoted $\pi_i
    = a_1, \ldots, a_{|\pi_i|}$, which starts at the initial state $s_0$ and visits
the states $s_1, \ldots, s_{|\pi_i|}$. Each such plan represents a sequence of
steps in state-space search. For partial-space search, we decompose each plan
action into a sequence of partial actions of increasing specificity starting at
$\none$, and pair them with the corresponding state that they are applied in.
This gives a sequence of state and partial action sequence pairs
$\langle\!s_0,\!(\rho_{1,\!1}, \ldots,\!\rho_{1, n_1})\rangle,\ldots,\langle
    s_{|\pi_i|-1},\!(\rho_{|\pi_i|,1},\ldots,\rho_{|\pi_i|,n_{|\pi_i|}})\rangle$,
where
$\rho_{j, 1}, \ldots, \rho_{j, n_j}$ are the partial actions obtained from
decomposing $a_j$. Note that $\rho_{j, 1} = \none$ and $\rho_{j, n_j}$ is always
a fully-instantiated action, with $s_{j+1}$ the resulting state of applying it in
$s_j$. Given this, we learn action set heuristics through ranking by extending
the methods proposed in \citet{chen-thiebaux-neurips2024}. We have also explored
doing so through regression in a similar way to what
\citet{chen-thiebaux-neurips2024} did, but found poor performance compared to
ranking. This matches the findings of various works, including
\citet{garrett-et-al-ijcai2016,chrestien-et-al-neurips2023}, and
\citet{hao-et-al-ijcai2024}.

\paragraph{Dataset Generation}

\newcommand{\gap}{\ensuremath{\delta}} \newcommand{\imp}{\ensuremath{\sigma}}

We use $\phi$ to denote the function that maps state and partial action pairs to
feature vectors through either AOAG or AEG using the WL algorithm. Given a state
and partial action sequence pair $\langle s, (\rho_1, \ldots, \rho_n)\rangle$,
we produce a dataset consisting of tuples of the form $\langle \mathbf{x},
    \mathbf{x'}, \gap, \imp \rangle$, where $\mathbf{x}$ and $\mathbf{x'}$ are
feature vectors, $\gap$ is the desired difference between their heuristic
values, and $\imp$ is the importance. Each such tuple is a ranking relation. It
specifies that the heuristic should assign a value to $\mathbf{x}$ that is at
least $\gap$ lower than the value assigned to $\mathbf{x'}$. The value $\imp$
indicates how important it is for the heuristic to satisfy this ranking
relation. Note that the importance weights $\imp$ did not appear in
\citet{chen-thiebaux-neurips2024}, but we found it useful to balance different
types of tuples. There are four types of tuples described below, we distinguish
importance of tuples by the type of the tuple only and not by the specific tuple
itself.

\newcommand{\layerPredecessor}{\ensuremath{\mathrm{lp}}}
\newcommand{\statePredecessor}{\ensuremath{\mathrm{sp}}}
\newcommand{\layerSibling}{\ensuremath{\mathrm{ls}}}
\newcommand{\stateSibling}{\ensuremath{\mathrm{ss}}}

\begin{enumerate}
    \item \emph{Layer predecessors}, which rank later partial actions better than
          earlier partial actions:
          \begin{align*}
                   & \{\langle\phi(s, \rho_1), \phi(s', a'), 1, \imp_\layerPredecessor\rangle\} \\
              \cup &
              \{\langle\phi(s, \rho_{i+1}), \phi(s,
              \rho_{i}), 1, \imp_\layerPredecessor\rangle \mid 1 \le i < n \}
          \end{align*}
          where $s'$ is the state before $s$ in the training plan, and $a'$ is
          the action that transitioned $s'$ to $s$.
    \item \emph{Layer siblings}, which rank partial actions better than those
          with the same specificity but not in the plan-induced sequence. Specifically, for
          each $\rho_i$ in the sequence and for each partial action $\rho'$
          applicable in $s$ but not in the sequence with $\specificity(\rho_i) =
              \specificity(\rho')$, we produce the tuple $\langle\phi(s, \rho_i), \phi(s,
              \rho'), 0, \imp_\layerSibling\rangle$.
    \item \emph{State predecessors}, which rank partial actions in the
          plan-induced sequence better than the state $s$. For each $\rho_i$, we
          produce the tuple $\langle \phi(s, \rho_i), \phi(s, \none), 1,
              \imp_\statePredecessor \rangle$.
    \item \emph{State siblings}, which ranks $\rho_n$ better than other fully
          instantiated partial actions not in the sequence. Specifically, for
          each fully instantiated applicable partial action $\rho' \neq \rho_n$,
          we produce the tuple $\langle \phi(s, \rho_n), \phi(s, \rho'), 0,
              \imp_\stateSibling \rangle$.
\end{enumerate}

\noindent Here, $\imp_\layerPredecessor$, $\imp_\layerSibling$,
$\imp_\statePredecessor$, and $\imp_\stateSibling$ are all hyperparameters.
Layer predecessors establish a chain of ranking relations throughout each
training plan. State predecessors act as fast tracks for such relations, similar
in spirit to skip connections in neural networks. Layer siblings ensure that
partial actions in the sequence are preferred. State siblings emphasising this
for fully instantiated actions. The $\delta$ difference values for the
predecessors tuples are $1$ to indicate that successors are strictly better,
while for siblings tuples it is $0$ to indicate that the siblings may be equally
good.

\newcommand{\asbf}{\alpha}
\newcommand{\objbf}{\beta}

We previously stated that our dataset generation method using partial-space
search yields more data than state-space search. Consider a hypothetical
planning task with $\asbf$ action schemas, each with $k$ parameters, and
$\objbf$ objects that can instantiate any parameter of any schema independently
of other parameters. For a plan of length $n$, the state-of-the-art method by
\citet{hao-et-al-ijcai2024} generates $n \asbf \objbf^k$ tuples. Our method
generates $2n(k+1) + n(\asbf \objbf^k - 1) + n \sum_{i=0}^k (\asbf \objbf^i -
    1)$ tuples, which is approximately $n(k + 2 \asbf \objbf^k)$ tuples.
Furthermore, our method not only generates more data but our data is more
granular, i.e., with a higher ratio of optimal decisions to available decision.
Specifically, \citet{hao-et-al-ijcai2024}'s dataset encodes $n$ optimal actions
resulting in a ratio of $1/(\asbf \objbf^k)$, i.e., the correct action choice is
always 1 out of the branching factor of the grounded actions $\asbf \objbf^k =
    |\actions|$. In contrast, our method encodes $n(k+1)$ optimal decisions (i.e.,
optimal action schema, optimal first parameter, etc.) resulting in an
approximate density of $(k+1)/(k+2 \asbf \objbf^k)$. For $\asbf \objbf^k (k-1) >
    k$, our dataset is denser, i.e., offers more optimal decisions to guide the
learning process. Note that this condition almost always holds since $\asbf
    \objbf^k = |\actions| \gg 2$.

\paragraph{Ranking Model}

Our complete dataset is the collection of all tuples from all state and partial
action sequence pairs, namely $\Dc = \{(\mathbf{x}_i, \mathbf{x'}_i, \gap_i,
    \imp_i) \mid i \in [|\Dc|]\}$. Once having generated this, we learn a linear
model with weights $\mathbf{w}$ by solving the following linear program,
\begin{gather*}
    \min_{\mathbf{w} \in \R^n, \mathbf{z} \in \R^{|\Dc|}_{\ge 0}} \quad C \sum_i \imp_i \mathbf{z}_i + \lVert \mathbf{w} \rVert_1 \\
    \begin{align*}
        \textup{s.t.} \quad \mathbf{w}^T (\mathbf{x}_i - \mathbf{x'}_i) & \ge \gap_i - \mathbf{z}_i & \forall i
    \end{align*}
\end{gather*}
Here, $C$ is a regularisation hyperparameter, and the variables $\mathbf{z}$
represent the slack variables for each of the constraints given by the ranking
relations. The term $\lVert \mathbf{w} \rVert_1$ is not linear itself, but it is
trivially encoded using auxiliary variables and linear constraints. The
optimisation target represents that, for all $i$, $\mathbf{w}$ should map
$\mathbf{x}_i$ and $\mathbf{x'}_i$ to values that are at least $\gap_i$ apart as
much as possible according to the importance $\imp_i$ (first term), while being
subject to regularisation (second term).

In practice, this linear program can be very large. We use constraint and column
generations techniques adapted from \citet{dedieu-et-al-jmlr2022} to solve it
efficiently. Moreover, we tune the hyperparameter $C$ automatically. Our
training instances are ordered roughly by problem size in ascending order. We
use the first 80\% of instances as our training set and the rest as our
validation set. We tune $C$ by applying Bayesian optimisation to minimise the
objective value $\sum_i \imp_i \mathbf{z}_i$ on the validation set.

Once $\mathbf{w}$ is learned, we obtain an action set heuristic $h_{\mathbf{w}}$
that first maps state and action set pairs to feature vectors, then uses the dot
product of the feature vectors and $\mathbf{w}$ as the heuristic value.

\section{Experimental Evaluation}

We implemented partial-space search and action set heuristics in a new planner,
LazyLifted, and evaluated it against the state-of-the-art. We test its
performance using benchmarks from the International Planning Competition (IPC)
2023 learning track \citep{taitler-et-al-aimag2024} and new high branching
factor benchmarks. The IPC set contains 10 domains, including Blocksworld,
Ferry, and Transport, among others.

The high branching factor (HBF) set contains 5 domains. It includes a version of
Blocksworld, Blocksworld Large, with many blocks that are irrelevant to the
goal, which induces a high branching factor. This is adapted from the
Blocksworld Large benchmarks by \citet{lauer-et-al-ijcai2021} to fit the IPC
2023 learning track style. We also include three versions of Transport,
Transport Sparse, Transport Dense, and Transport Full, which reflect the
underlying graphs are sparse, dense, and fully connected. Branching factor on
Transport increases with graph density. Lastly, we introduce a new domain,
Warehouse, which models multiple stacks of boxes with some needing to be
removed. Warehouse resembles Blocksworld in the modelling of box stacks, but
differs as boxes can move in a single action between the tops of two stacks.
This results in a high branching factor, quadratic to the number of stacks,
which is high even for small instances.

Each IPC or HBF benchmark contains up to 99 training instances and 90 test
instances. The test instances are ordered and grow in size rapidly. The training
instances are roughly the same size as the smallest 30 test instances. A
detailed description of the benchmarks, including their problem sizes and
branching factors, is in the appendix. Our code and benchmarks will be made
available upon publication.

Our learning-based heuristics require sample plans on training instances. For
the IPC set, we use the same training plans as \citet{chen-et-al-icaps2024},
which are generated by the Scorpion optimal planner \citep{seipp-et-al-jair2020}
under a 30 minutes timeout for each instance. For the HBF set, we generate
training plans for the Transport variants using the same method. For Blocksworld
Large and Warehouse, Scorpion cannot solve most of the training instances in the
time limit, so we use the first plan returned by LAMA
\citep{richter-westphal-jair2010} under the same time limit.

LazyLifted has importance hyperparameters for dataset generation
(Sec.~\ref{sec:training}). We set these
to $\imp_\layerPredecessor = 0.5$, $\imp_\layerSibling = 2.0$,
$\imp_\statePredecessor = 0.5$, and $\imp_\stateSibling = 1.0$ when training
AOAG based heuristics and to $\imp_\layerPredecessor = 0.5$, $\imp_\layerSibling
    = 2.0$, $\imp_\statePredecessor = 0.5$, and $\imp_\stateSibling = 0.75$ when
training AEG based heuristics. These values were found to work well across all
domains except Warehouse. On Warehouse, the high branching factor even on small
training instances meant that there are a lot of siblings, which sometimes made
the optimal weights close to zero. We use $\imp_\layerPredecessor = 2.0$,
$\imp_\layerSibling =1.5$, $\imp_\statePredecessor = 2.0$, and
$\imp_\stateSibling = 0.5$ for Warehouse for both AOAG and AEG. Lastly, we use
two iterations of the WL algorithm for training the heuristic.

We compare LazyLifted against state-of-the-art baselines LAMA, Powerlifted
(PWL), and WL-GOOSE. LAMA is a strong satisficing planner that uses multiple
heuristics with additional optimisation techniques
\citep{richter-westphal-jair2010}. We use LAMA with its first plan output.
Powerlifted is a lifted planner that specialises in solving hard-to-ground
planning tasks, which is particularly relevant as many high branching factor
tasks are also hard-to-ground, and hence require lifted planning. WL-GOOSE
learns state-of-the-art heuristics for state-space search to guide GBFS
\citep{chen-et-al-icaps2024}. We use our own implementation of WL-GOOSE, which
performs slightly better than the original version. We trialed several
configurations for WL-GOOSE, namely the configuration used in original paper,
the recommended configuration the WL-GOOSE codebase\footnote{Available from
    \url{https://github.com/DillonZChen/goose}.}, and a variation of the recommended
configuration that only uses 2 WL iterations instead of 3 to match LazyLifted.
Ultimately, we only present results for the last configuration as it has the
best overall performance.

We do not consider other learning-based planners as all that we are aware of
perform noticeably worse than WL-GOOSE on the IPC set as shown either by
\citet{chen-et-al-icaps2024} or the results in the IPC 2023 learning track
\citep{taitler-et-al-aimag2024}.

Moreover, since action set heuristics can be used as
state-space heuristics, we run all our action set heuristics on both state and
partial-space search (denoted \stateSpaceSearch and \partialSpaceSearch) to
isolate the effect of the search space. This means that we compare both the
state-space $\hFF$ heuristic (denoted $\stateSpaceSearch$-FF) and the
partial-space $\restrictedFF$ heuristic (denoted $\partialSpaceSearch$-FF). It
also means that we train two action set heuristics, one for AOAG and one for
AEG, and use them on both search spaces, resulting in four possible
configurations.

All experiments are conducted on a cluster with Intel Xeon 3.2 GHz CPU cores.
Training for both LazyLifted and WL-GOOSE is performed using 1 core with 32 GB
of memory with a 12-hour timeout, excluding data generation, i.e., solving
training instances. Both data generation and solving testing problems are done
using 1 core and 8 GB of memory with a 30-minutes timeout per problem.

We evaluate planners on two metrics, coverage and quality. Coverage is the
number of problem solved and the results per domain are shown in Table
\ref{tab:overview-coverage}. The quality score of a planner on a task is $0$
if it does not find a plan and $C^* / C$ otherwise, where $C$ is the cost of the
plan found by the planner and $C^*$ is the cost of the best plan found by any
planner in the experiment. Table \ref{tab:overview-quality} shows the sum of
quality scores per domain. It is worthnoting that, due to the benchmark
setup where the instances in each domain scale up in size quickly, even a small
increase either metric can represent a significant improvement. See the Appendix for additional results.

\begin{table}[tbp]
    \centering
    \small
    
    \setlength{\tabcolsep}{2pt}
    \begin{tabular}{clccccccccc}
        \toprule
         &  & \multicolumn{4}{c}{Baseline} & \multicolumn{5}{c}{New} \\
        \cmidrule(lr){7-11} \cmidrule(lr){3-6} Set & Domain & \rotatebox[origin=c]{90}{LAMA} & \rotatebox[origin=c]{90}{WL-GOOSE} & \rotatebox[origin=c]{90}{PWL} & \rotatebox[origin=c]{90}{$\mathrm{S^3}$-FF} & \rotatebox[origin=c]{90}{$\mathrm{S^3}$-AOAG} & \rotatebox[origin=c]{90}{$\mathrm{S^3}$-AEG} & \rotatebox[origin=c]{90}{$\mathrm{PS^2}$-FF} & \rotatebox[origin=c]{90}{$\mathrm{PS^2}$-AOAG} & \rotatebox[origin=c]{90}{$\mathrm{PS^2}$-AEG} \\
        \midrule
        \multirow{10}{*}{\rotatebox[origin=c]{90}{IPC23 LT}} & blocksworld & 60 & 77 & 37 & 29 & 89\cellcolor{second} & \textbf{90}\cellcolor{first} & 24 & 89\cellcolor{second} & 84\cellcolor{third} \\
         & childsnack & 35 & 36\cellcolor{third} & 0 & 16 & 37\cellcolor{second} & \textbf{38}\cellcolor{first} & 14 & 9 & 15 \\
         & ferry & 68 & \textbf{90}\cellcolor{first} & 0 & 60 & \textbf{90}\cellcolor{first} & 88\cellcolor{second} & 49 & \textbf{90}\cellcolor{first} & 87\cellcolor{third} \\
         & floortile & \textbf{11}\cellcolor{first} & 3 & 9\cellcolor{third} & 10\cellcolor{second} & 1 & 1 & 6 & 1 & 1 \\
         & miconic & \textbf{90}\cellcolor{first} & \textbf{90}\cellcolor{first} & 81\cellcolor{second} & 77\cellcolor{third} & \textbf{90}\cellcolor{first} & \textbf{90}\cellcolor{first} & 68 & \textbf{90}\cellcolor{first} & \textbf{90}\cellcolor{first} \\
         & rovers & \textbf{69}\cellcolor{first} & 34\cellcolor{third} & 53\cellcolor{second} & 28 & 34\cellcolor{third} & 32 & 34\cellcolor{third} & 34\cellcolor{third} & 26 \\
         & satellite & \textbf{89}\cellcolor{first} & 41 & 0 & 49 & 39 & 53\cellcolor{second} & 50\cellcolor{third} & 29 & 41 \\
         & sokoban & \textbf{40}\cellcolor{first} & 27 & 34\cellcolor{second} & 32\cellcolor{third} & 27 & 27 & 29 & 27 & 24 \\
         & spanner & 30\cellcolor{third} & 71\cellcolor{second} & 30\cellcolor{third} & 30\cellcolor{third} & 71\cellcolor{second} & 71\cellcolor{second} & 30\cellcolor{third} & \textbf{72}\cellcolor{first} & \textbf{72}\cellcolor{first} \\
         & transport & \textbf{66}\cellcolor{first} & 47 & 53\cellcolor{third} & 36 & 53\cellcolor{third} & 50 & 38 & 54\cellcolor{second} & 54\cellcolor{second} \\
        \midrule
        \multicolumn{2}{c}{\textbf{sum IPC coverage}} & \textbf{558}\cellcolor{first} & 516 & 297 & 367 & 531\cellcolor{third} & 540\cellcolor{second} & 342 & 495 & 494 \\
        \midrule
        \multirow{5}{*}{\rotatebox[origin=c]{90}{HBF}} & blocksworld-large & 7 & 31 & 2 & 0 & 35\cellcolor{third} & 18 & 0 & \textbf{74}\cellcolor{first} & 48\cellcolor{second} \\
         & transport-sparse & 62\cellcolor{second} & 37 & 58\cellcolor{third} & 31 & 41 & 37 & 31 & \textbf{64}\cellcolor{first} & 58\cellcolor{third} \\
         & transport-dense & \textbf{66}\cellcolor{first} & 57\cellcolor{third} & 52 & 39 & 59\cellcolor{second} & 51 & 36 & 57\cellcolor{third} & 57\cellcolor{third} \\
         & transport-full & \textbf{66}\cellcolor{first} & 58 & 0 & 42 & 63\cellcolor{second} & 61\cellcolor{third} & 41 & 55 & 60 \\
         & warehouse & 30 & 15 & \textbf{90}\cellcolor{first} & 35 & 58\cellcolor{third} & 27 & 54 & 79\cellcolor{second} & 49 \\
        \midrule
        \multicolumn{2}{c}{\textbf{sum HBF coverage}} & 231 & 198 & 202 & 147 & 256\cellcolor{third} & 194 & 162 & \textbf{329}\cellcolor{first} & 272\cellcolor{second} \\
        \midrule
        \multicolumn{2}{c}{\textbf{sum total coverage}} & 789\cellcolor{second} & 714 & 499 & 514 & 787\cellcolor{third} & 734 & 504 & \textbf{824}\cellcolor{first} & 766 \\
        \bottomrule
    \end{tabular}
    \caption{Coverage of various planning systems. The best score for each row is highlighted in bold. The top three unique scores for each row are highlighted in different shades of green with darker being better. }\label{tab:overview-coverage}
\end{table}

\begin{table}[tbp]
    \centering
    \small
    
    \setlength{\tabcolsep}{2pt}
    \begin{tabular}{clccccccccc}
        \toprule
         &  & \multicolumn{4}{c}{Baseline} & \multicolumn{5}{c}{New} \\
        \cmidrule(lr){7-11} \cmidrule(lr){3-6} Set & Domain & \rotatebox[origin=c]{90}{LAMA} & \rotatebox[origin=c]{90}{WL-GOOSE} & \rotatebox[origin=c]{90}{PWL} & \rotatebox[origin=c]{90}{$\mathrm{S^3}$-FF} & \rotatebox[origin=c]{90}{$\mathrm{S^3}$-AOAG} & \rotatebox[origin=c]{90}{$\mathrm{S^3}$-AEG} & \rotatebox[origin=c]{90}{$\mathrm{PS^2}$-FF} & \rotatebox[origin=c]{90}{$\mathrm{PS^2}$-AOAG} & \rotatebox[origin=c]{90}{$\mathrm{PS^2}$-AEG} \\
        \midrule
        \multirow{10}{*}{\rotatebox[origin=c]{90}{IPC23 LT}} & blocksworld & 36 & 73 & 21 & 13 & 88\cellcolor{third} & \textbf{89}\cellcolor{first} & 12 & 88\cellcolor{second} & 82 \\
         & childsnack & 22 & 36\cellcolor{third} & 0 & 12 & 37\cellcolor{second} & \textbf{38}\cellcolor{first} & 8 & 9 & 14 \\
         & ferry & 57 & 87\cellcolor{third} & 0 & 55 & 89\cellcolor{second} & 86 & 33 & \textbf{90}\cellcolor{first} & 86 \\
         & floortile & \textbf{10}\cellcolor{first} & 3 & 9\cellcolor{third} & 9\cellcolor{second} & 1 & 1 & 4 & 0 & 1 \\
         & miconic & 73 & \textbf{89}\cellcolor{first} & 77 & 76 & \textbf{89}\cellcolor{first} & \textbf{89}\cellcolor{first} & 56 & 82\cellcolor{second} & 82\cellcolor{third} \\
         & rovers & \textbf{67}\cellcolor{first} & 21 & 52\cellcolor{second} & 25 & 19 & 17 & 30\cellcolor{third} & 17 & 13 \\
         & satellite & \textbf{85}\cellcolor{first} & 31 & 0 & 47\cellcolor{second} & 26 & 41 & 43\cellcolor{third} & 17 & 25 \\
         & sokoban & \textbf{35}\cellcolor{first} & 19 & 29\cellcolor{second} & 27\cellcolor{third} & 19 & 19 & 23 & 14 & 12 \\
         & spanner & 30 & 67\cellcolor{third} & 30 & 30 & 67\cellcolor{third} & 67\cellcolor{third} & 30 & 68\cellcolor{second} & \textbf{68}\cellcolor{first} \\
         & transport & \textbf{63}\cellcolor{first} & 39 & 47\cellcolor{second} & 29 & 43 & 43 & 26 & 41 & 44\cellcolor{third} \\
        \midrule
        \multicolumn{2}{c}{\textbf{sum IPC quality}} & 480\cellcolor{second} & 465 & 264 & 324 & 478\cellcolor{third} & \textbf{490}\cellcolor{first} & 265 & 426 & 426 \\
        \midrule
        \multirow{5}{*}{\rotatebox[origin=c]{90}{HBF}} & blocksworld-large & 7 & 22 & 2 & 0 & 27\cellcolor{third} & 12 & 0 & \textbf{64}\cellcolor{first} & 39\cellcolor{second} \\
         & transport-sparse & \textbf{59}\cellcolor{first} & 31 & 48\cellcolor{second} & 21 & 27 & 23 & 21 & 47\cellcolor{third} & 38 \\
         & transport-dense & \textbf{63}\cellcolor{first} & 48\cellcolor{third} & 46 & 31 & 50\cellcolor{second} & 47 & 25 & 47 & 36 \\
         & transport-full & \textbf{63}\cellcolor{first} & 48 & 0 & 39 & 52\cellcolor{third} & 57\cellcolor{second} & 32 & 42 & 40 \\
         & warehouse & 30 & 6 & \textbf{89}\cellcolor{first} & 35 & 54 & 14 & 54\cellcolor{third} & 57\cellcolor{second} & 17 \\
        \midrule
        \multicolumn{2}{c}{\textbf{sum HBF quality}} & 222\cellcolor{second} & 154 & 185 & 126 & 209\cellcolor{third} & 153 & 132 & \textbf{257}\cellcolor{first} & 171 \\
        \midrule
        \multicolumn{2}{c}{\textbf{sum total quality}} & \textbf{702}\cellcolor{first} & 619 & 449 & 449 & 687\cellcolor{second} & 643 & 398 & 683\cellcolor{third} & 597 \\
        \bottomrule
    \end{tabular}
    \caption{Quality score of various planning systems rounded to integers. The best score for each row is highlighted in bold. The top three unique scores for each row are highlighted in different shades of green with darker being better. }\label{tab:overview-quality}
\end{table}

\paragraph{Comparison against baselines}

Overall, our learned action set heuristics outperform WL-GOOSE in both total
coverage and total quality scores across both partial and state-space search. On
the IPC set, only learned action set heuristics with state-space search
($\stateSpaceSearch$-AOAG/AEG) outperform WL-GOOSE by around 5\% in coverage and
quality. The improvements are more substantial on the HBF set, where our learned
action set heuristics with partial-space search (specifically,
$\partialSpaceSearch$-AOAG) are up to 60\% better in coverage and quality scores
than WL-GOOSE. In the combined set of problems, \stateSpaceSearch-AOAG and
\partialSpaceSearch-AOAG obtained 10\% to 15\% better total coverage and 10\%
improvement in total quality over WL-GOOSE. Additional results available in
Appendix show that the learned action set heuristics perform similarly to
WL-GOOSE on runtime overall, with the comparison varying widely for specific
domains. Altogether, given WL-GOOSE's state-of-the-art status in learned
heuristics, these results represent a significant improvement.

LazyLifted also outperforms LAMA in several cases, namely coverage and quality
for the HBF set, quality for the IPC set, and total coverage when considering
the combined set. LazyLifted is notably strong on IPC domains such as Blocksworld,
Ferry, and Spanner, and the non-Transport HBF domains, while still lacking
behind on domains such as Rovers and Satellite. It is noteworthy that LazyLifted
finds low-quality plans on Transport domains compared to coverage, indicating
that LazyLifted's learned action set heuristics are not able to outperform LAMA
on these domains. In most other cases, LazyLifted finds plans with reasonable
quality. However, the large makeup of Transport domains in the benchmark sets
means that LazyLifted outperforms LAMA only in total coverage but not total
quality.

It is important when interpreting these results to note that LAMA employs
various planning techniques, e.g., multi-queue search and preferred operators,
while our system only uses a single heuristic with GBFS over either partial or
state-space search. In theory, techniques used in LAMA could also be applied in
LazyLifted to further enhance performance. We do not use them to isolate the
effect of our contributions over existing techniques.

Lastly, LazyLifted performs significantly better than Powerlifted (PWL) on both
the IPC set and HBF set. The improvement over Powerlifted on the IPC set is
unsurprising given that Powerlifted is not the state-of-the-art in general
classical planning. However, given the state-of-the-art nature of Powerlifted in
hard-to-ground planning, the improvement over Powerlifted on the HBF set
indicates improvements LazyLifted makes in this space.

\paragraph{Partial-space search versus state-space search}

Our experiments are set up to isolate the effect of partial-space search over
state-space search by running the same heuristics on both search spaces. Our results show
that partial-space search is not beneficial on the IPC set, as indicated by
coverage and quality scores. On the HBF set, partial-space search is
unsurprisingly very impactful, as indicated by the substantial improvement in
coverage and quality scores.

To further investigate, we compared the branching factor (average number of
successors per expansion), number of expansions, and number of evaluations
between partial and state-space search (see appendix for full results). In
terms of median, on the combined IPC and HBF set, partial-space search reduces
the branching factor by 88\%, increases the number of expansions by around 7
times, and reduces the number of evaluations by approximately 14\%. This
indicates that partial-space search can reduce the overall work required to
solve planning tasks, which is typically dominated by the number of
evaluations. These results also demonstrated that the IPC domains tend to have
a lower branching factor, leading to more evaluations for partial-space search.
This explains why partial-space search is not beneficial on the IPC set.

\paragraph{Informedness of learned action set heuristics}

Our experiments also highlight the effectiveness of partial-space
search at training heuristics. The only difference between WL-GOOSE and
$\stateSpaceSearch$-AOAG/AEG is the heuristic used, where the latter uses
partial-space search to train action set heuristics, then uses these action
set heuristics as regular state-space heuristics. The learned action set
heuristics with state-space search generally outperform WL-GOOSE on both
coverage and quality. Moreover, they also generally have less search node
expansions and heuristic evaluations (see Appendix). These results overall
indicate that training with partial-space search leads to more informed
heuristics. We hypothesise that this improvement partially stems from the
larger training datasets generated by LazyLifted, which are on average 2.62
times larger than those generated by WL-GOOSE from the same training
plans.

\paragraph{AOAG versus AEG}

Our results show that AEG works slightly better on the IPC set, while AOAG works
much better on the HBF set, regardless of the search space. We hypothesise that
this is due to the average of size of action sets on these domains and the
difference in the way the two graphs encode action sets. AEG encodes action sets
more deeply into the graph by considering action effects, but does not label in
the graph which optional effect belongs to which actions. This likely works
better under low branching factors, as there is less ambiguity between effects
of different actions. In contrast, the shallow action encoding of AOAG is
more explicit, which likely works better under high branching factors by
avoiding this ambiguity.

\paragraph{How well does $\hFF$ translate to partial-space search?}

Another case of interest is the comparison between $\restrictedFF$
(\partialSpaceSearch-FF) and $\hFF$ (\stateSpaceSearch-FF): $\restrictedFF$ is
slightly worse than $\hFF$ on the IPC set and slightly better than HBF set. This
is consistent with the effects of partial-space search, showing that
$\restrictedFF$ is a faithful translation of $\hFF$ to partial-space search.

\paragraph{On the HBF Transport variants, why does coverage of partial-space
    search decrease as graph density increases?}

We observed that as graph density increases, the branching factor increases,
which should favour partial-space search. However, the path-finding problem also
becomes much easier, meaning that the high branching factor costs on dense
graphs is paid far less often than on sparse graphs, making partial-space search
overall less effective.

\paragraph{Why do learned action set heuristics not perform well on domains such as Rovers and Satellite?}

Our methods for learning action set heuristics are based on those used in
WL-GOOSE \citep{chen-et-al-icaps2024}, namely feature extraction using the WL
algorithm, and we inherit weaknesses of their methods. Our learned action set
heuristics perform similarly to WL-GOOSE on these domains, indicating that our
methods are likely not the cause of the poor performance.

\section{Conclusion, Related, and Future Work}

We introduced partial-space search for planning, which is a new search space
that allows search and learned heuristics to be more deeply integrated.
Partial-space search decomposes branching factor, allows more training data to be
generated out of the same training plans, and is able to shift the evaluation
speed versus informedness trade-off of heuristics to favour informedness. This
results in significant empirical improvements, particularly under high
branching factors, over existing learned heuristics and even outperforms the
state-of-the-art non-learning planner LAMA, which indicates the potential of
deep integrations between planning and learning.

Partial-space search requires specialised action set heuristics that evaluate a
set of actions on a state. We showed how partial-space search can be generally
useful by providing a method to automatically convert existing state-space
heuristics to action set heuristics. Moreover, we introduced novel graph
representations that encapsulate action set semantics in various ways for
learning well-informed action set heuristics. These learned action set
heuristics are effective whether employed in partial or state-space search.

In terms of related work, learning for planning, particularly in the form of
learned heuristics, has gathered notable research attention. Previous works have
developed methods for learning state-space heuristics
\citep{chen-et-al-icaps2024} and explored possible alternative heuristic targets
\citep{garrett-et-al-ijcai2016,ferber-et-al-icaps2022,chrestien-et-al-neurips2023,hao-et-al-ijcai2024}.
Yet to our knowledge, this is the first work focused on how to design a search
algorithm based on the capabilities of learned heuristics.

For dealing with large branching factors, \citet{yoshizumi-et-al-aaai2000}
proposed a variant of the A* search algorithm (PEA*) that partially expands
search nodes to reduce the memory cost induced by high branching factors.
\citet{goldenberg-et-al-jair2014} extended PEA* to also reduce the time cost
of search, but required domain and heuristic knowledge. Their works had
focused on modifying the search algorithm, while our partial-space search
modifies the search space by incorporating heuristic predictions
over sets of actions and are hence complementary.

In the context of Monte-Carlo Tree Search, \citet{shen-et-al-socs2019} explored
how to guide search with generalised policies. Their work is similar to our as
they also explore alternative forms of heuristics, in their case, a generalised
policy that predicates a probability distribution over applicable actions in a
state. Similar to how partial-space search introduces a different interface
between search and heuristics over the traditional state-based interface, their
approach uses a probability distribution as the interface. Their work is
focused on the application of search to overcome weaknesses in learned
generalised policies, while our work redesigns search itself to better suit
learned heuristics. Their work also achieved limited empirical success, while
we demonstrated significant empirical improvements over existing learning-based
methods.

In future work, we hope to extend the applicability of partial-space search and
explore other forms of deep integrations between planning and learning.
Extending the applicability involves finding efficient adaptations of existing
heuristics and other search techniques to partial-space search. It also involves
extending partial-space search to other planning paradigms, such as numeric
planning and planning under uncertainty. Exploring other forms of deep integrations
involves exploring other ways learning and planning can help each other, such as
using learning to guide planners to focus on specific subtasks of planning
tasks.

\section*{Acknowledgments}

The computing resources for this work was supported by the Australian Government
through the National Computational Infrastructure (NCI) under the ANU Startup
Scheme.

\bibliography{paper}

\appendix
\section{Detailed Explanation of $\restrictedFF$}\label{app:restricted-ff}

\newcommand{\Cc}{\ensuremath{\mathcal{C}}}
\newcommand{\Rc}{\ensuremath{\mathcal{R}}}
\newcommand{\Fc}{\ensuremath{\mathcal{F}}}

To explain the restricted FF heuristic $\restrictedFF$, we first introduce some
background on the lifted computation of the FF heuristic $\hFF$.

\subsection{Background}

A Datalog \emph{rule} $r$ has the form $P \leftarrow Q_1, \ldots, Q_n$, where
$P$ is the head of the rule $head(r)$ and $Q_1, \ldots, Q_n$ are the body of the
rule $body(r)$. Each term $P, Q_1, \ldots, Q_n$ is a Datalog \emph{predicate}
with some argument variables. Given a set of \emph{constants} $\Cc$, the rule
$r$ can be grounded by replacing each argument variable with a constant from
$\Cc$. Each subrule has the semantics that the head predicate is true if all
body predicates are true. A Datalog \emph{program} $\Dc = \langle \Fc, \Rc
    \rangle$ consists of a set of \emph{facts} $\Fc$ and a set of rules $\Rc$. The
facts in $\Fc$ represent ground predicates that are initially true, and rules in
$\Rc$ define how to derive new ground predicates from existing ones.

\citet{correa-et-al-aaai2022} showed that the well-known FF heuristic $\hFF$
can be computed in a lifted fashion using Datalog. For each planning task $\Pi$,
they define a Datalog program $\Dc_\Pi$, whose facts $\Fc_\Pi$ represent the
current state of the planning task, and rules $\Rc_\Pi$ correspond to the
action schemas of the planning task. Their computation relies on first
preprocessing the planning task to generate the Datalog program $\Dc_\Pi$. For
each heuristic evaluation, they change the facts in $\Fc_\Pi$ to represent the
state being evaluated, and then run a modified Depth First Search (DFS) to
compute the FF heuristic value.

\subsection{Restricted FF Heuristic}

Our restricted FF heuristic $\restrictedFF$ is a variant of the FF heuristic
where we would like to restrict the initial set of actions that can be applied
in the state being evaluated to a subset $\actionSet$, as defined in Def. 4 of
the main paper. To do so, we first modify the planning task according to the
first two modifications in Def. 4. We use this modified planning task to
generate the preprocessed Datalog program in the same way as
\citet{correa-et-al-aaai2022}. For each heuristic evaluation, we add temporary
Datalog rules representing the action schemas that would be added in
modification 3 of Def. 4. These temporary rules are added in the same way as
\citet{correa-et-al-aaai2022}. We then run the modified DFS to compute the FF
heuristic value. The restricted FF heuristic $\restrictedFF$ is the value
computed by the modified DFS. Once computed, we remove the temporary rules and
return the heuristic value.

\section{Domain Description}\label{app:domain-description}

A detailed description of the IPC 2023 learning track domains is available from
\citet{taitler-et-al-aimag2024}. Below we describe the high branching factor
domains.

\paragraph{Blocksworld Large}

The Blocksworld Large domain is a variant of the classical Blocksworld domain
with a larger number of blocks. The domain consists of a number of blocks
arranged into towers, with the goal to change the arrangement of the blocks to
match a goal configuration. Blocksworld Large differs from the Blocksworld
benchmarks in the IPC set in that there are a lot more blocks, yet the problem
is generally not much harder since most blocks are not in the goal.

\paragraph{Transport variants}

The original Transport domain in the IPC set involves a number of trucks that
must move packages between locations in a graph, subject to a capacity limit for
each truck. The Transport variants in the HBF set are identical to the original
tasks, except that the density of the graphs is changed. In the original IPC
set, the number of edges is randomly selected from $V-1$ to $V(V-1)/2$, where
$V$ is the number of vertices. In the Sparse variant, the number of edges is
is randomly selected from $V-1$ to $1.5(V-1)$. In the Dense variant, the number
of edges is randomly selected from $0.5V(V-1) / 2$ to $0.8V(V-1) / 2$. The Full
variant has fully connected graphs. In all variants and the IPC set, the graphs
do not contain disconnected components.

\paragraph{Warehouse}

Warehouse is a new domain that we introduced in the HBF set. The domain involves
a number of boxes stacked together, similar to Blocksworld. However, the goal is
to remove a subset of boxes, i.e., moving them out of the warehouse. Only the
boxes marked for removal can be removed. Moreover, unlike Blocksworld, boxes on
top of stacks can be moved to another stack in a single action, which makes the
branching factor quadratic to the number of stacks rather than linear, as in
Blocksworld. This creates an extremely high branching factor, even for small
tasks.

Table~\ref{tab:concise-domain-description} shows the average branching factor and a proxy for the range of the problem's size for each domain in our experiments.

\begin{table}[!ht]
    \centering
    \footnotesize

    \setlength{\tabcolsep}{0.7mm}
    \begin{tabular}{lcccc}
        \toprule
                                                     & \multicolumn{2}{c}{Test (easy)} & \multicolumn{2}{c}{Test (medium/hard)}                                   \\
        \cmidrule(lr){4-5} \cmidrule(lr){2-3} Domain & Size                            & Bran. Fac.                             & Size        & Bran. Fac.        \\
        \midrule
        blocksworld                                  & [5, 30]                         & [1.7, 6.3]                             & [35, 500]   & [4.6, 61.0]       \\
        childsnack                                   & [4, 10]                         & [1.8, 18.1]                            & [15, 300]   & [1680, 26287] \\
        ferry                                        & [1, 20]                         & [1.8, 5.3]                             & [10, 1000]  & [5.4, 223]      \\
        floortile                                    & [2, 30]                         & [1.9, 3.5]                             & [100, 952]  & [5.6, 212]      \\
        miconic                                      & [1, 10]                         & [1.6, 11.4]                            & [20, 500]   & [17.4, 146]     \\
        rovers                                       & [1, 4]                          & [3.8, 30.2]                            & [5, 30]     & [19.1, 726]     \\
        satellite                                    & [3, 10]                         & [10.0, 103]                          & [15, 100]   & [199, 3532]   \\
        sokoban                                      & [1, 4]                          & [1.2, 2.8]                             & [4, 80]     & [1.2, 2.0]        \\
        spanner                                      & [1, 5]                          & [1.1, 1.9]                             & [15, 250]   & [1.1, 44.4]       \\
        transport                                    & [3, 6]                          & [5.0, 79.6]                            & [10, 50]    & [27.3, 1669]    \\
        bw-large                            & [500, 703]                      & [102, 159]                         & [720, 1221] & [113, 161]    \\
        trpt-sparse                             & [3, 6]                          & [3.7, 13.8]                            & [10, 50]    & [10.1, 195]     \\
        trpt-dense                              & [3, 6]                          & [7.0, 39.8]                            & [10, 50]    & [105, 1653]   \\
        trpt-full                               & [3, 6]                          & [14.8, 53.2]                           & [10, 50]    & [180, 1879]   \\
        warehouse                                    & [4, 40]                         & [7.3, 1178]                          & [40, 80]    & [1413, 2079]  \\
        \bottomrule
    \end{tabular}
    \caption{Benchmark domain size and branching factor descriptions. Size is measured by number of key objects, e.g, blocks in Blocksworld. Branching factor is the average number of successors generated per state when using state space search with $h^{\mathrm{FF}}$.}\label{tab:concise-domain-description}
\end{table}

\section{More Experimental Results}\label{app:comparisons}

Figure~\ref{fig:comparisons} shows the comparison between partial-space search and state-space search in terms of branching factor, expansions and evaluations, for AOAG and AEG.
Figure~\ref{fig:heuristic-comparisons} presents the comparison of learned action set heuristics with state-space search versus WL-GOOSE on evaluations and expansions (bottom).
Lastly, Figure~\ref{fig:comparisons} shows the comparison of learned action set heuristics versus WL-GOOSE on runtime.

\begin{figure*}[!ht]
    \centering
    \includegraphics[width=\textwidth]{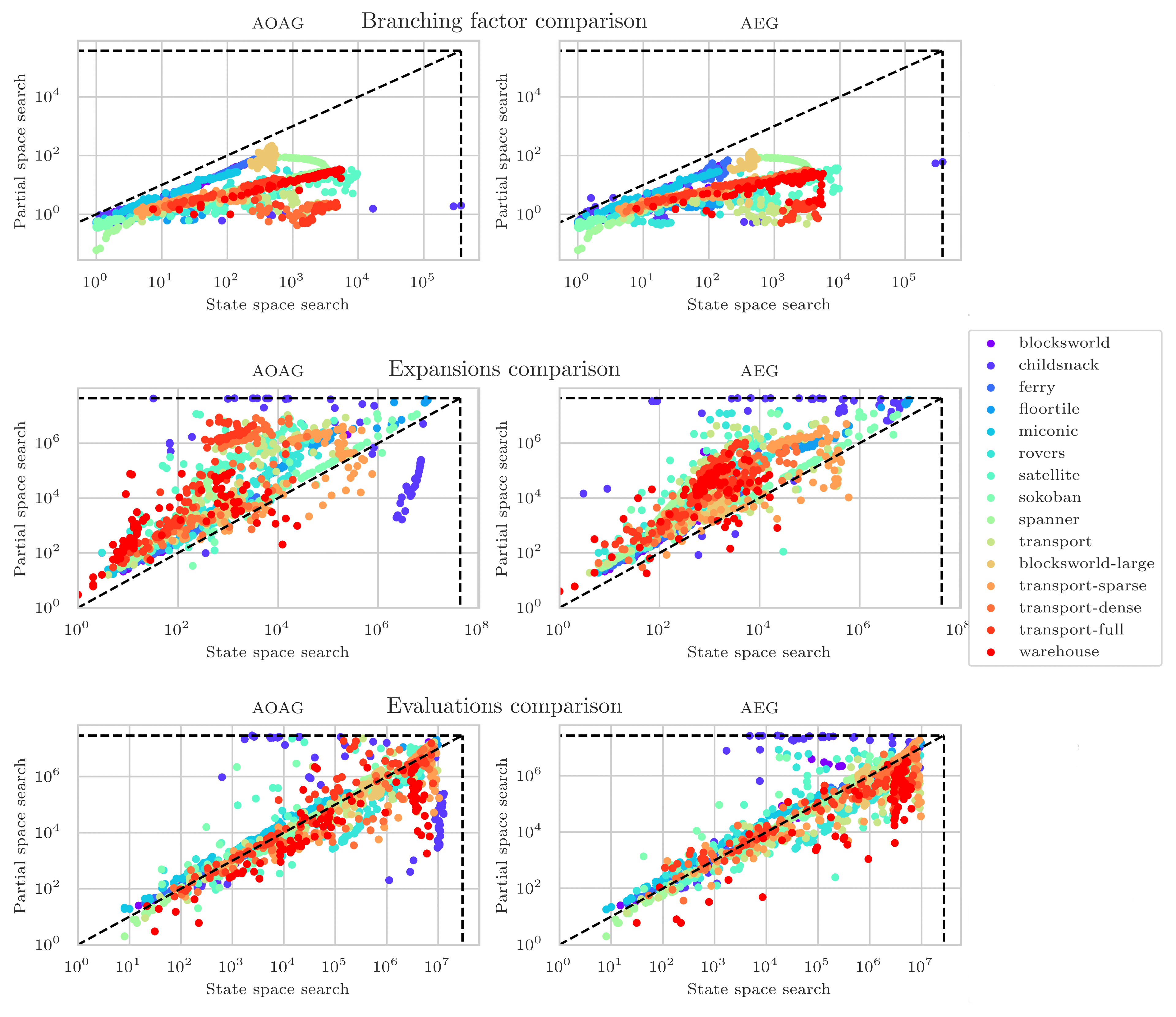}
    \caption{Comparison of partial-space search versus state-space search on
        branching factor (top, measured as average number of successors per
        expansion), expansions (middle), and evaluations (bottom), for AOAG and
        AEG. The $x=y$ line is shown in the diagonals. Points below it favour
        partial-space search and above it favour state-space search.}
    \label{fig:comparisons}
\end{figure*}

\begin{figure*}[!ht]
    \centering
    \includegraphics[width=\textwidth]{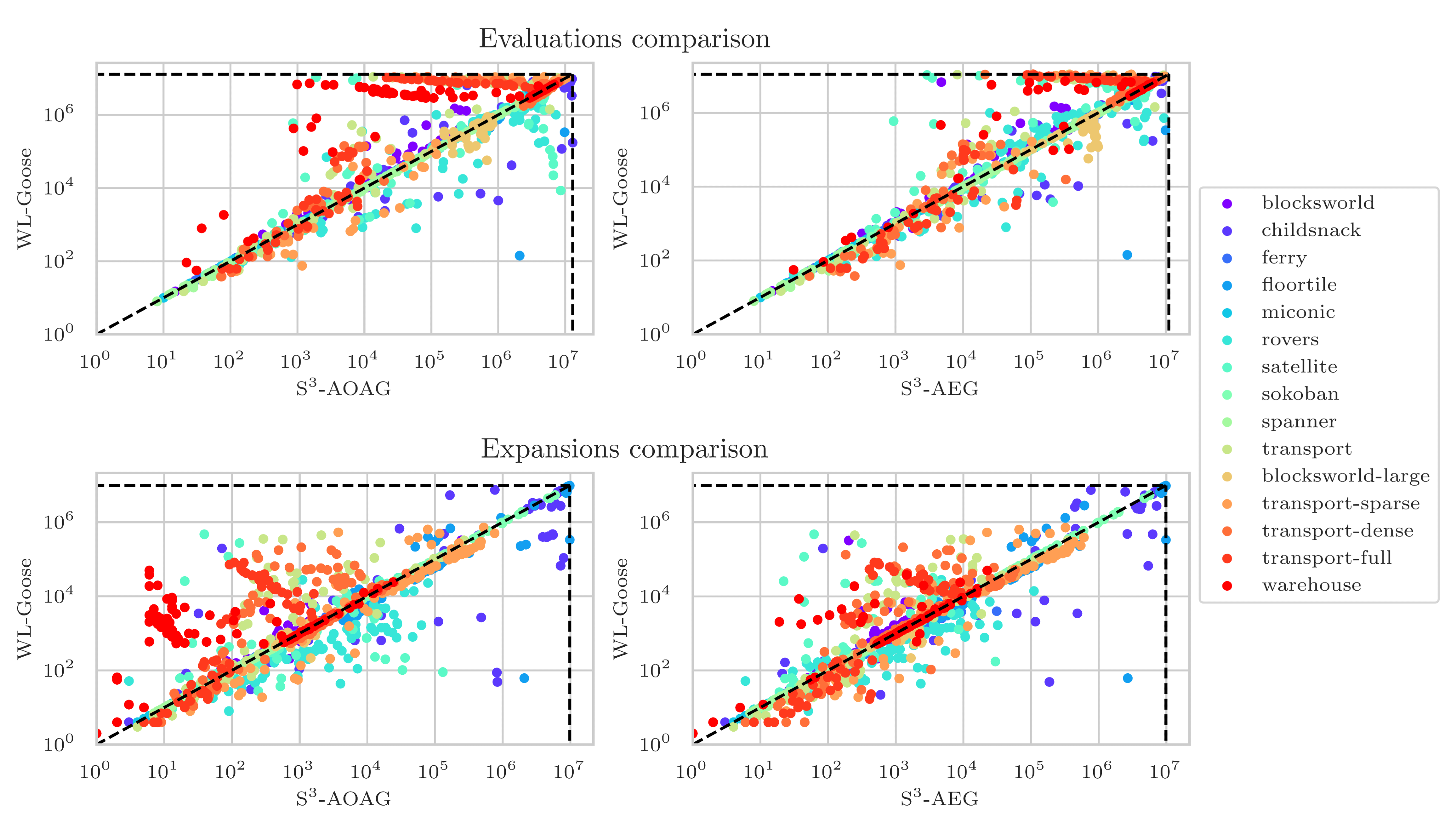}
    \caption{Comparison of learned action set heuristics with state-space search
        versus WL-GOOSE on evaluations (top), and expansions (bottom). The $x=y$
        line is shown in the diagonals. Points above it favour the learned
        action set heuristics and below it favour WL-GOOSE.}
    \label{fig:heuristic-comparisons}
\end{figure*}

\begin{figure*}[!ht]
    \centering
    \includegraphics[width=\textwidth]{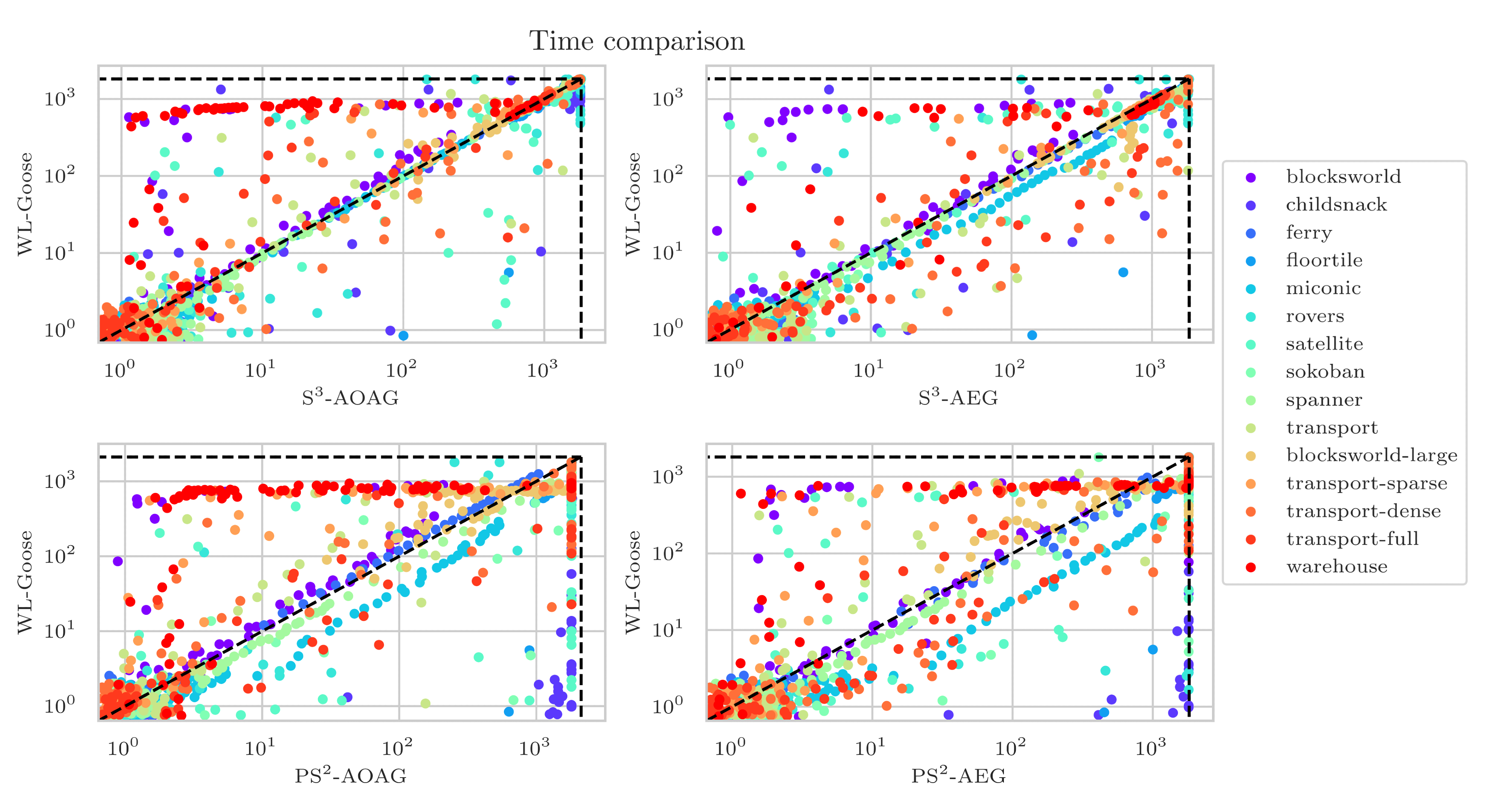}
    \caption{Comparison of learned action set heuristics versus WL-GOOSE on
        runtime. The $x=y$ line is shown in the diagonals. Points above it
        favour the learned action set heuristics and below it favour WL-GOOSE.}
    \label{fig:time-comparison}
\end{figure*}

%

\end{document}